\newtheorem{proposition}{Proposition}
\newcommand{\htheta}{\hat{\theta}}
\newcommand{\ttheta}{\tilde{\theta}}
\newcommand{\bx}{\bm{x}}
\newcommand{\by}{\bm{y}}
\newcommand{\bw}{\bm{w}}
\newcommand{\bbeta}{\bm{\beta}}
\newcommand{\ind}{\mathbb{I}}
\newcommand{\diff}{\Delta^2}
\newcommand{\hPhi}{\hat{\Phi}}
\title{Finding Influential Instances for Distantly Supervised Relation Extraction}
\author{Zifeng Wang\textsuperscript{\rm 1,2}, Rui Wen\textsuperscript{\rm 3}, Xi Chen\textsuperscript{\rm 3}, Shao-Lun Huang\textsuperscript{\rm 2}, Ningyu Zhang\textsuperscript{\rm 4}, Yefeng Zheng\textsuperscript{\rm 3}\\ 
 \textsuperscript{\rm 1}University of Illinois Urbana Champaign  
 \textsuperscript{\rm 2}TBSI, Tsinghua University \\
 \textsuperscript{\rm 3}Jarvis Lab, Tencent
 \textsuperscript{\rm 4}Zhejiang University \\
Email: zifengw2@illinois.edu }
\date{}
\begin{document}
\maketitle
\begin{abstract}
Distant supervision (DS) is a strong way to expand the datasets for enhancing relation extraction (RE) models but often suffers from high label noise. Current works based on attention, reinforcement learning, or GAN are black-box models so they neither provide meaningful interpretation of sample selection in DS nor stability on different domains. On the contrary, this work proposes a novel model-agnostic instance sampling method for DS by influence function (IF), namely REIF. Our method identifies favorable/unfavorable instances in the bag based on IF, then does dynamic instance sampling. We design a fast influence sampling algorithm that reduces the computational complexity from $\mathcal{O}(mn)$ to $\mathcal{O}(1)$, with analyzing its robustness on the selected sampling function. Experiments show that by simply sampling the favorable instances during training, REIF is able to win over a series of baselines which have complicated architectures. We also demonstrate that REIF can support interpretable instance selection.
\end{abstract}

\section{Introduction}
To expand the training data for relation extraction (RE), distant supervision (DS) was proposed by \cite{mintz2009distant} who assumed that if two entities are related in existing KBs, then all sentences contain both of them express this relation. However, this heuristic inevitably suffers from wrong labels \cite{takamatsu2012reducing} and undermines model performance. For example, the sentence ``\emph{Bill Gates redefined the software industry, ... said Rob Glaser, a former Microsoft executive}" does not mention the relation \emph{founder} but is still treated as a positive training sample in DS. Dealing with noisy instances in DS has been a focus in RE. There are three main genres in the literature: (1) incorporating an attention module \cite{lin2016neural} to allocate confidence level among instances in the same bag; (2) using reinforcement learning \cite{qin2018robust} for instance selection; and (3) leveraging adversarial training \cite{wu2017adversarial} to enhance the RE model's robustness against noise. However, they are either black-box models thus unable to provide meaningful interpretation of sample selection or sensitive to datasets. More importantly, none of them is theoretically guaranteed to truely reduce the ``noise'' from the dataset.

In this work, we propose to leverage influence function (IF) to evaluate instance quality then do instance selection for DS. Influence function is a powerful tool drawn from robust statistics \cite{huber2004robust}. It is able to approximate the influence of a single data point on the whole model learned on the dataset. Creating to this merit, it has been successfully utilized for inspecting outliers \cite{boente2002influence} and denoising datasets \cite{wang2019less} based on shallow machine learning models, e.g., logistic regression. Although Koh \& Liang \shortcite{koh2017understanding} extends IF to interpreting deep networks, it is still elusive if it works for denoising datasets for deep networks. In this work, we develop the \textbf{R}elation \textbf{E}xtraction by \textbf{I}n\textbf{F}luence subsampling (REIF) framework, which aims for denoising DS for deep learning RE models. 

The high-level idea of REIF is shown by Fig. \ref{fig:0}. Each instance is assigned a quality measure $\phi$, from which its sampling probability is obtained via the sampling function $\pi$. Accordingly, the better an instance's quality is, the more likely it is picked during training. We will explain the operational meaning of $\phi$ in Section \ref{sec:3.2}. In a nutshell, the main contributions of this paper are 
\begin{itemize}
\item We develop a novel IF-based denosing framework for DS RE, namely REIF, for denoising RE by sampling favorable training instances.

\item An efficient implementation of REIF enables subsampling in $\mathcal{O}(1)$ complexity, instead of the $\mathcal{O}(mn)$ complexity without our implementation.

\item Empirical experiments show REIF's superiority over other baselines, and we identify its capability to support interpretable instance selection for RE by a case study.\footnote{Code is available in the supplementary materials.}
\end{itemize}

\begin{wrapfigure}{R}{0.5\textwidth}
\centering
  \includegraphics[width=0.99\linewidth]{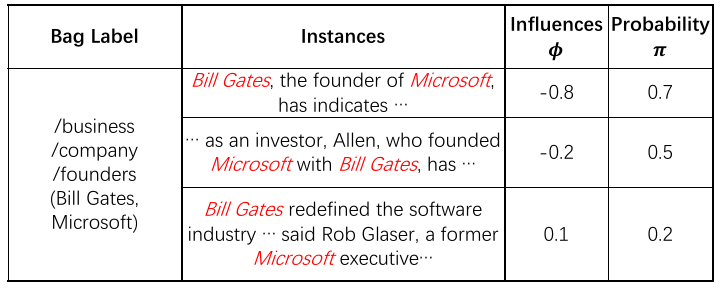}
\caption{Finding influential instances within a bag via subsampling based on the calculated probability $\pi$. Note that here \emph{negative} $\phi$ means a beneficial sample. \label{fig:0}}
		\vspace{-1.0em}
\end{wrapfigure}

\section{Related Work}
There are a series of works trying to address the noisy label difficulty in DS by multi-instance learning (MIL) \cite{hoffmann2011knowledge,riedel2010modeling,surdeanu2012multi}. MIL considers the training labels in \emph{bag} level instead of instance level. Each bag contains at least one instance with the labeled relation while the exact label of each instance is unknown. As MIL being proved effective in relation extraction, it was firstly introduced to neural relation extraction by Zeng et al. \shortcite{zeng2015distant}, where the piece-wise convolutional neural network (PCNN) was developed, and only one instance with the largest predicted probability was selected in each bag.

Later, attention \cite{lin2016neural,ZHOU2018240,jia2019arnor,yuan2019cross,ye2019distant,zhou2021self}, reinforcement learning \cite{feng2018reinforcement,yang2018distantly,qin2018robust,chen2021distant}, and adversarial training \cite{wu2017adversarial,qin2018dsgan,han2018denoising,shi2018genre} have been proposed for further improvement. However, above works usually require intense trials in fine-tuning of the hyper-parameters in practice, or are not interpretable to human-beings. In this work, we propose a model-agnostic and interpretable instance selection method via IF, which is easy-to-use for most DL models without many hyperparameters to choose.

\section{Methodology} \label{sec:prelim}
In this section, we elaborate on the major steps of REIF associated with the technical details and the theoretic foundation of measuring data quality by influences. Also, an analysis supporting our choice of sampling function is given.

\subsection{Relation Extraction by Influence Subsampling}
Our REIF is model-agnostic thus amenable to most DL models. Without loss of generality, we pick PCNN \cite{zeng2015distant} as the encoder for the input texts. The flowchart of our framework is shown in Fig. \ref{fig:1}. It includes three main parts: 1) backbone model and 2) instance selection.

\textbf{Backbone Model}. Inputs of the encoder are raw sentences represented by indices of words, e.g., a sentence $x_*$ with $l$ words $x_* = \{x_{*,1},\dots,x_{*,l} \}$. We transform them into dense real-valued representation vectors as $\bw_* = \{\bw_{*,1},\dots,\bw_{*,l}\}$, by concatenating the word embedding from $\bm{V} \in \mathbb{R}^{d^a \times |V|}$ (where $|V|$ denotes the size of the vocabulary and $d^a$ is the dimension of word embedding) and position embedding with dimension $d^p$ together. As there are two position embeddings, each word vector in $\bw$ has dimension $d^a + 2\times d^p$. Convolution layer processes the word representations as
\begin{equation} \label{eq:cnn}
\bx_* = {\rm CNN}(\bw_*).
\end{equation}
The CNN model receives representation vectors $\bw_*$ and outputs the processed feature vectors $\bx_* \in \mathbb{R}^{d\times l}$. The probability for relation prediction, taking $\bx_*$ as input, is given by
\begin{equation}
P(y=k | \bx_*) = \frac{\exp(\bbeta^{(k)\top}\bx_*)}{\sum_{k^\prime} \exp(\bbeta^{(k^\prime)\top}\bx_*) }, 
\end{equation} 
where $\bbeta = \{\bbeta^{(1)} \dots \bbeta^{(K)}\} \in \mathbb{R}^{d\times K}$ is the weight matrix of the last fully-connected layer; $K$ is the total number of relations.

\begin{figure}[t]
\centering
\includegraphics[width=0.95\textwidth]{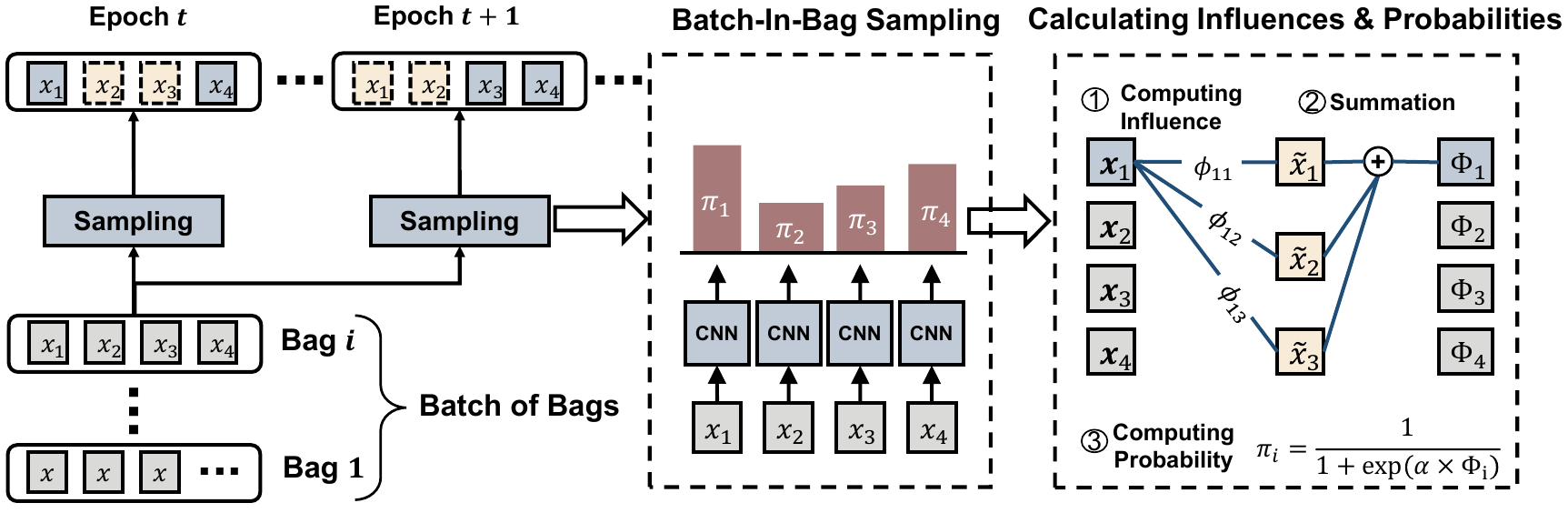}
\caption{The flowchart of the instance-level subsampling method, where $x$ is training sentence; $\tilde{x}$ is the validation sample; $\phi$ is the computed influence; and a dotted box means the instance is dropped after subsampling. \label{fig:1}}
\vskip -1.5em
\end{figure}

\textbf{Dynamic Instance Sampling}. One possible way to do sample selection by IF is \emph{post-hoc}, i.e., it first samples from the full training set, then retrains the model on the subsamples. However, we argue it is unsuitable for DS. In post-hoc sampling, all instances are gathered together, hence the subsamples are dominated by majority relations with lots of training instances, resulting in severe class imbalance. In an extreme case, minority relations may completely disappear after subsampling. 

On contrast, we propose dynamic instance sampling (DIS) which is executed within bags during training. Given a bag $X = \{x_1,\dots,x_n\}$ containing $n$ sentences, we try to sample a subset $X_{sub}$ with $|X_{sub}| < n$ from $X$. To this end, we calculate the influences $\Phi_i, \ \forall i=1,\dots,n$, and sampling probabilities $\pi_i$ are
\begin{equation} \label{eq:sigmoid}
\pi_i = \pi(\Phi_i) := \frac1{1 + \exp(\alpha \times \Phi_i)},
\end{equation}
where $\pi_i$ is the probability of $x_i$ being selected and $\alpha$ is a hyper-parameter. Consequently, the training objective function $J(\theta)$ is
\begin{equation}
J(\theta) = \frac1{|{X}_{sub}|} \sum_{x_i \in {X}_{sub}} \ell_i(\theta),
\end{equation}
where $\ell(\theta)$ is the abbreviation of loss function $\ell(x,y;\theta)$ for notation simplicity.

\subsection{Theoretic Foundation of Influence-based Sample Quality Measure} \label{sec:3.2}
The core step of REIF is to measure the instance influence $\Phi$. Intuitively, adverse instances, which cause model validation loss increasing, should be assigned low probability being sampled, and vice versa. We next present the property of $\Phi$ and substantiate this intuition in a rigorous way.

Consider a classification problem where we attempt to obtain a model $f_\theta: \mathcal{X} \to \mathcal{Y}$, which is parametrized by $\theta$, that can make prediction from an input space $\mathcal{X}$ (e.g., sentences) to an output space $\mathcal{Y}$ (e.g., relations). Given a set of training data $\{x_i\}_{i=1}^n$ and the corresponding labels $\{y_i\}_{i=1}^n$, the optimal $\hat{\theta}$ defined by
\begin{equation}
\htheta := \mathop{\arg\min}_{\theta \in \Theta} \frac1n \sum_{i=1}^n \ell_i(\theta).
\end{equation}
We evaluate the learned $f_{\htheta}$ on an additional validation set $\{(x_j^v, y^v_j)\}_{j=1}^m$ such as
\begin{equation} \label{eq:valloss}
L(\htheta) := \frac1m \sum_{j=1}^m \ell_j^v(\htheta)
\end{equation}
where $\ell_j^v(\hat{\theta})$ is the validation loss on $x_j^v$.

In order to quantitatively measure the $i$-th training sample's influence over model's validation loss, we can perturb the training loss $\ell_i(\theta)$ by a small $\epsilon$, then retrain a perturbed risk minimizer $\ttheta$ as
\begin{equation} \label{eq:perturbedtheta}
\ttheta := \mathop{\arg\min}_{\theta \in \Theta} \frac1n \sum_{i^\prime=1}^n \ell_{i^\prime}(\theta) + \epsilon \times \ell_{i}(\theta).
\end{equation}
As a result, we are able to compute the validation loss change of the validation sample $x^v_j$ by
\begin{equation} \label{eq:deltaj}
\delta_j(\epsilon) := \ell_j^v(\ttheta) - \ell_j^v(\htheta).
\end{equation}
It indicates to what extent $x_i$ \emph{influences} the prediction on $x^v_j$. If $\epsilon=-1/n$, according to Eq. \eqref{eq:perturbedtheta}, $x_i$'s loss $\ell_i(\theta)$ is actually \emph{removed} from the objective function. In this situation, $\delta_j(\epsilon) > 0$, i.e., $\ell_j^v(\ttheta) - \ell_j^v(\htheta) > 0$, implies that removing $x_i$ causes the validation loss on $x^v_j$ increasing, i.e.,
\begin{equation}
\delta_j\left(-\frac1n\right)>0  \to x_i \ \text{is good for} \ x_j^v.
\end{equation}
The influence function $\phi_{i,j} := \phi(x_i, x^v_j; \htheta)$ linearly approximate $\delta_j(\epsilon)$ by
\begin{equation} \label{eq:influence}
\delta_j(\epsilon) = \ell_j^v(\ttheta) - \ell_j^v(\htheta)  \simeq \epsilon \times  \phi_{i,j},
\end{equation}
where the closed-form expression of $\phi$ is given in \cite{koh2017understanding} as
\begin{equation} \label{eq:closedformphi}
\phi_{i,j} := - \nabla_{\theta} \ell_j^v(\htheta)^{\top} H_{\htheta}^{-1} \nabla_{\theta} \ell_i(\htheta)
\end{equation}
and $H_{\htheta} := \frac1n \sum_{i=1}^n \nabla_{\theta}^2 \ell_i(\htheta)$ is the Hessian matrix. 

In short, by Eq. \eqref{eq:influence}, $\delta_j(-1/n) > 0$ is equivalent to $\phi_{i,j} < 0$. We can compute $x_i$'s influence over the whole validation set by summation
\begin{equation} \label{eq:phiidef}
\Phi_i  = \sum_{j=1}^m \phi_{i,j} = - \sum_{j=1}^m \nabla_\theta \ell_j^{v \top}(\htheta)H_{\htheta}^{-1} \nabla_{\theta} \ell_i(\htheta).
\end{equation}
Now, $\Phi_i < 0$ implies that $x_i$ is \emph{good} for the whole validation set. Also, if $\Phi_i$ is smaller, then $x_i$ is more likely to be a favorable sample, and vice versa.

\subsection{On Robustness of Sampling Functions}
With the influence measure $\Phi$, it seems that we can simply drop all \emph{unfavorable} samples that have $\Phi>0$. However, we argue that using $0$ as the threshold usually results in failure to the out-of-sample test, due to its sensitivity to distribution shift. Instead, we take the measure of \textit{probabilistic} sampling by designing a sampling function $\pi(\Phi) \in [0,1]$. We give the reason of this choice based on the deviation of the induced validation loss by inaccurate estimate of influence. Let's denote the validation loss with inaccurate influence by $\ell^v(\ttheta;\hat{\Phi})$, thus
\begin{equation}
\diff( L) := \frac1m \sum_{j=1}^m(\ell_j^v(\ttheta;\hat{\Phi})-\ell_j^v(\ttheta))^2
\end{equation}
indicates the robustness of the model under $\hat{\Phi}$. We then give the following proposition on $\diff(L)$ with respect to sampling function $\pi$. Proof can be found in Appendix \ref{app:1}.

\begin{proposition}[Robustness of Probabilistic Sampling under Inaccurate Influence] \label{theo:var}
Let $\pi^{\prime}(\Phi_i)$ be the derivative of $\pi(\cdot)$ function when taking $\Phi_i$ as its input, we have
\begin{equation}\label{eq:propvar}
 \sup_{\Phi,\hPhi}   \diff(L)  = \gamma  \sum_{i=1}^n (\pi(\hPhi_i) - \pi(\Phi_i))^2 \sum_{j=1}^m \phi_{i,j}^2 
\simeq \gamma \sum_{i=1}^n \left( (\hPhi_i - \Phi_i)\pi^{\prime}(\Phi_i)  \right)^2 \sum_{j=1}^m \phi_{i,j}^2
\end{equation}
where $\gamma$ is a constant.
\end{proposition}

It can be viewed that $\diff(L)$ is controlled by the derivative of sampling function $\pi^{\prime}(\Phi)$. For the sigmoid sampling in Eq. \eqref{eq:sigmoid}, it is easy to derive that
\begin{equation}
\pi^{\prime}(\Phi) = -\alpha \pi(\Phi) (1 - \pi(\Phi)),
\end{equation}
which means $\max |\pi^{\prime}(\Phi)| = \frac14 \alpha$ when $\Phi = 0$. $\diff(L)$ is hence controlled by the hyper-parameter $\alpha$. When $|\Phi|$ increases, $|\pi^\prime(\Phi)|$ reduces sharply, which ensures the variance's upper bound being tight all the time. By contrast, in deterministic sampling, $\diff(L)$ is sensitive to inaccurate $\hPhi$ because it is ``hard", or more rigorously, because $\diff(L)$ is probably large due to large $|\pi(\Phi)-\pi(\hPhi)|$ caused by an improper dropout threshold.

\section{Efficient Implementation} \label{sec:method}
Recap Eq. \eqref{eq:phiidef}, computing $\Phi_i$ requires $\phi_{i,j}$ in Eq. \eqref{eq:closedformphi} for $j = 1, \dots, m$ on all validation samples. As a result, the computation of all $\{\Phi_i\}_{i=1}^n$ has $\mathcal{O}(mn)$ time complexity. Moreover, for DNNs with massive parameters, computing the layer-wise gradients $\nabla_\theta \ell(\theta)$ is intractable. These limitations prevent the use of IF from DL RE models. To address it, we here propose a rather efficient implementation of REIF. We demonstrate how to reduce the complexity of calculating influences from $\mathcal{O}(mn)$ to $\mathcal{O}(n)$, then to $\mathcal{O}(1)$. In addition, we show how to compute the influence function by stochastic estimation.

\subsection{Computing Influences in Linear Time}
 We argue that in Eq. \eqref{eq:phiidef}, it is unnecessary to calculate $\phi_{i,j}$ separately, since here we only care about their summations. Specifically, since the summation is only related to the subscript $j$, we can cast it to
\begin{equation}\label{eq:lossagg}
\Phi_i  = - \nabla_\theta \ell_i^{\top}(\htheta) H_{\htheta}^{-1} \sum_{j=1}^m \nabla_\theta \ell_j^v(\htheta) 
= - \nabla_\theta \ell_i^{\top}(\htheta) H_{\htheta}^{-1}  \nabla_\theta \sum_{j=1}^m \ell_j^v(\htheta) 
=  - m \nabla_\theta \ell_i^{\top}(\htheta) H_{\htheta}^{-1} \nabla_\theta L(\htheta),
\end{equation}
where $L(\htheta)$ comes from Eq. \eqref{eq:valloss}. By this derivation, we can calculate $L(\htheta)$ rather than all $l_j(\htheta)$, then take derivative of $L(\htheta)$. Since $L(\htheta)$ only needs to be calculated once and it is shared in calculating all $\Phi_i$s, this process only requires $\mathcal{O}(n)$ time, without loss of accuracy.

\subsection{Linear Approximation for $\mathcal{O}(1)$ Complexity}
$\nabla_\theta \ell(\htheta)$ in Eq. \eqref{eq:lossagg} usually has complicated expression when $f_\theta(\cdot)$ is a neural network, hence the previous works implemented it by the auto-grad systems like TensorFlow \cite{abadi2016tensorflow} and PyTorch \cite{paszke2019pytorch}. However, when the number of alternative training instances is large, even $\mathcal{O}(n)$ is not satisfactory enough, because additional differential operations need to be done on each $\ell_i(\htheta)$ sequentially. Moreover, when faced with complex neural networks with massive parameters, computing the Hessian matrix $H_{\htheta}$ and its inversion is intractable. Considering these issues, we propose a linear approximation approach to reduce the complexity to $\mathcal{O}(1)$, and avoid operating on all parameters of the neural network. 

Suppose the cross entropy loss function is used:
\begin{equation}
\ell(\theta) = - \sum_{k=1}^K \ind \{y = k\} \log P(y=k | x; \theta)
\end{equation}
where $\ind(\cdot)$ is an indicator function. Let $\by, \hat{\by} \in \mathbb{R}^K$ be the one-hot label vector, e.g., $(1,0,0)^{\top}$, and prediction vector, e.g., $(0.8,0.1,0.1)^{\top}$, respectively. We replace $\nabla_\theta \ell(\theta)$ in Eq. \eqref{eq:closedformphi} with the derivatives on $\bbeta$ (the weight of the last fully-connected layer):
\begin{equation} \label{eq:nablaltr}
\nabla_\theta \ell(\theta) \Rightarrow \nabla_{\bbeta} \ell(\theta) = (\hat{\by} - \by) \bx^{\top} \in \mathbb{R}^{d \times K}
\end{equation}
where $\bx$ is the input of the last fully-connected layer. This closed-form expression allows computing batch gradients in $\mathcal{O}(1)$ time. Although the calculated influence might be inaccurate, it is still reliable for measuring instances' \emph{relative quality} in general. We will validate this claim in our experiments.

\begin{figure}[t]
\centering
\begin{minipage}[t]{0.40\textwidth}
\begin{flushleft}
\begin{subfigure}[t]{0.95\textwidth}
\includegraphics[width=0.99\linewidth]{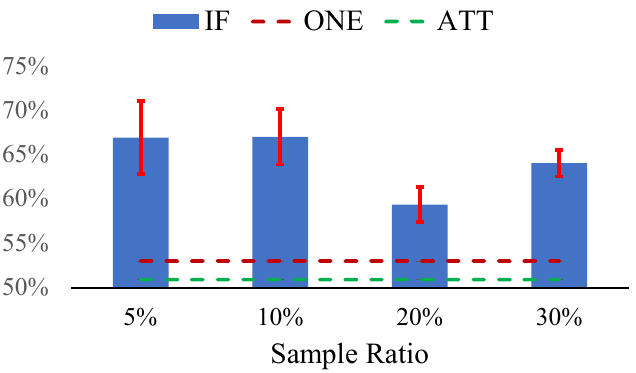}
    \caption{NYT-SMALL}
\end{subfigure}
\begin{subfigure}[t]{0.95\textwidth}
    \includegraphics[width=0.99\linewidth]{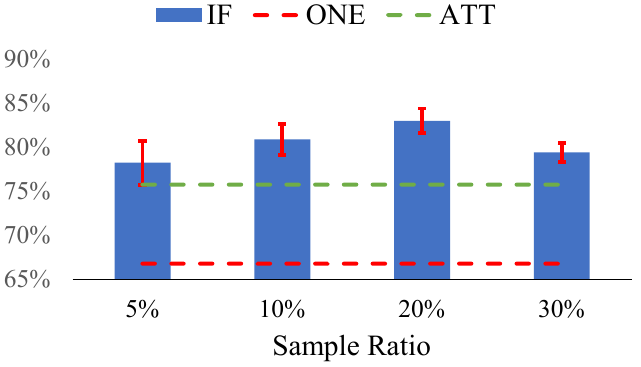}
    \caption{NYT-LARGE}
\end{subfigure}
\caption{Mean P@N (average of P@100/200/300) varies with sampling ratio of REIF (IF) method. Red bar represents standard error by 5 times repeat experiments.  \label{fig:patn_ratio}}
\end{flushleft}
\end{minipage}
\hspace{.3em}
\begin{minipage}[t]{0.40\textwidth}
\begin{flushright}
\begin{subfigure}[t]{0.95\textwidth}
\includegraphics[width=0.99\linewidth]{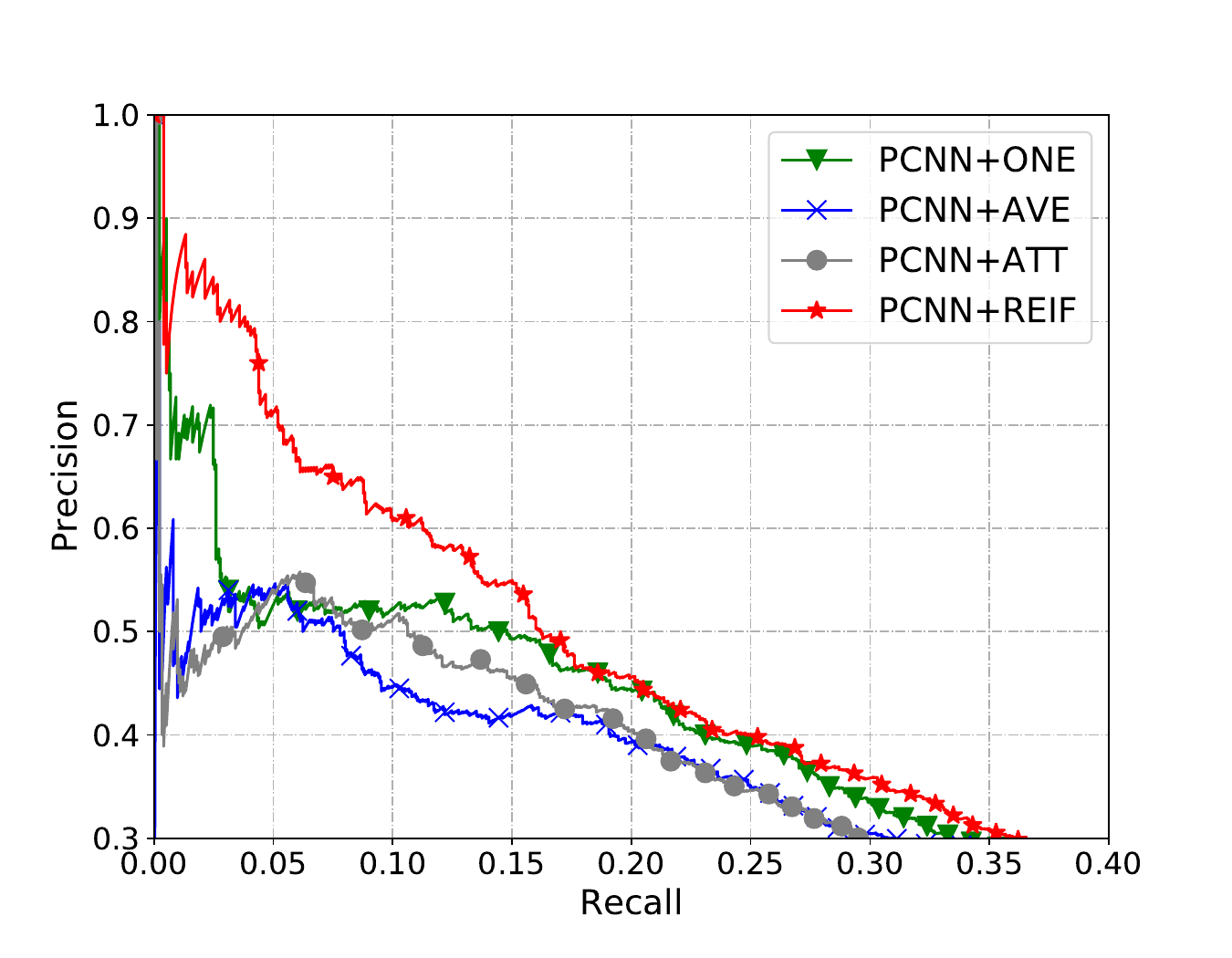}
    \caption{NYT-SMALL}
\end{subfigure}
\begin{subfigure}[t]{0.95\textwidth}
    \includegraphics[width=0.99\linewidth]{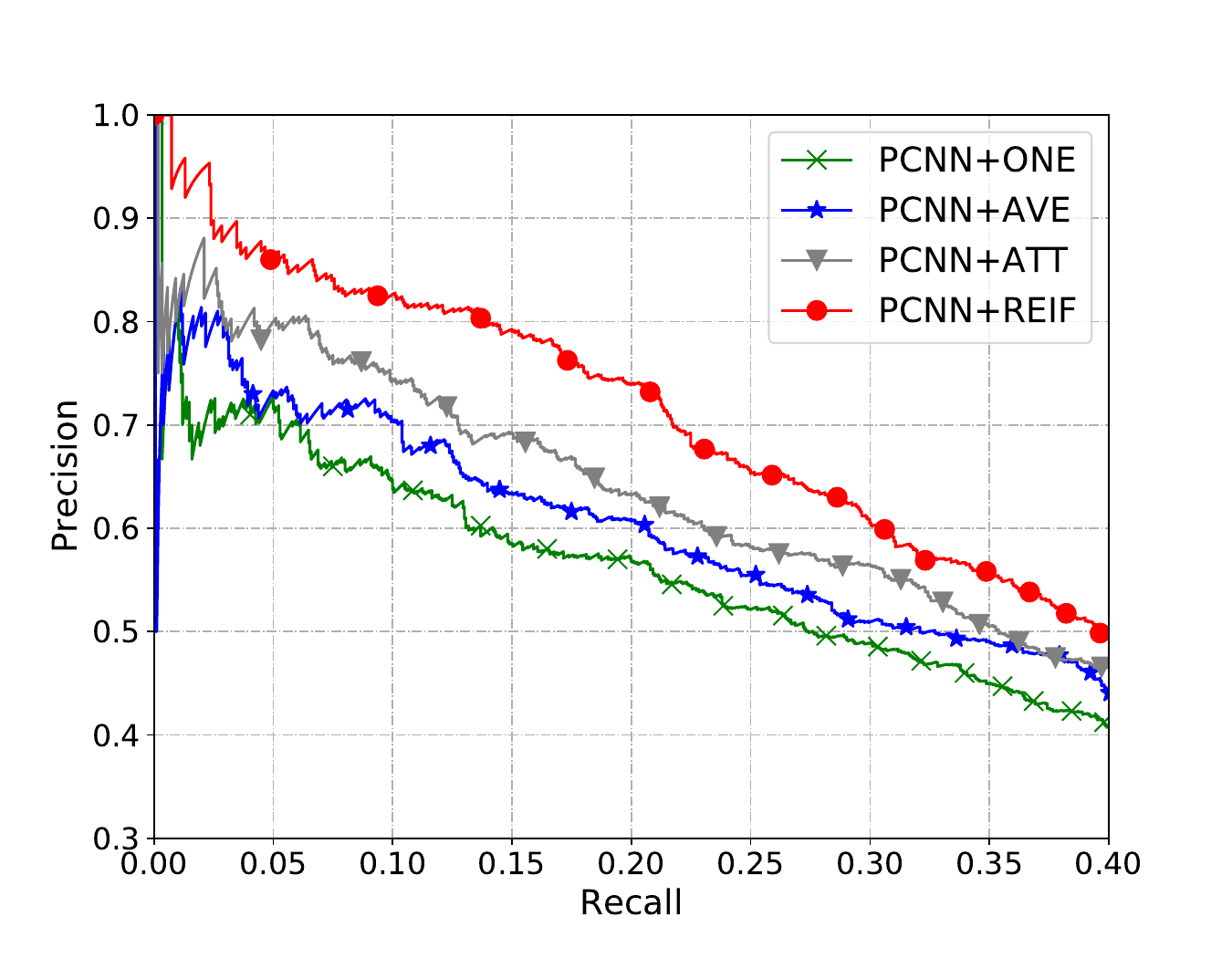}
    \caption{NYT-LARGE}
\end{subfigure}
    \caption{Aggregated precision-recall (P-R) curves obtained by PCNN+ONE, PCNN+AVE, PCNN+ATT, and the proposed PCNN+REIF on NYT-SMALL (left) and NYT-LARGE (right) datasets.\label{fig:res}}
\end{flushright}
\end{minipage}
\vspace{-1em}
\end{figure}

\subsection{Algorithm}
Algorithm \ref{alg:1} shows the details of REIF, please refer to Appendix \ref{app:alg}. It has two hyper-parameters: the sampling ratio $r$ and the sigmoid sampling parameter $\alpha$. The optimal value of $r$ depends on quality of the dataset, since the higher quality it is, the more favorable instances it might have. Keeping $\alpha=1$ is satisfactory in most scenarios.

In particular, on the line \#14 of Algorithm \ref{alg:1}, we compute the product between the inverse Hessian matrix and a gradient vector via the stochastic estimation procedure by Koh \& Liang \cite{koh2017understanding}. Denoting the vector $\nabla_\theta L(\htheta)$ by $v$, it first initializes the approximate inverse Hessian-Vector-Product (HVP) by $\tilde{H}_0^{-1}v \gets v$, then repeatedly samples $n_b$ training instances and updates as
\begin{equation} \label{eq:stochasticest}
\tilde{H}_t^{-1}v \gets v + \left(I - \frac1{n_b}\sum \nabla_\theta^2 \ell(\htheta) \right)\tilde{H}_{t-1}^{-1}v
\end{equation}
until $\tilde{H}_t^{-1}v$ converges. In our algorithm, we only need to do this once after each epoch, to get the precomputed inverse HVP $s = H_{\htheta}^{-1} \nabla_\theta L(\htheta)$. Therefore, during training, we directly compute $\nabla_\theta \ell_i(\htheta)$ for each instance according to Eq. \eqref{eq:nablaltr}, then multiply it with the precomputed $s$.

\section{Experiments}
We concentrate on the following research questions:

\textbf{RQ1}. How does our REIF perform as compared with classical baselines?

\textbf{RQ2}. How does the sampling ratio $r$ influence the performance of the REIF?

\textbf{RQ3}. Does the sigmoid function lead to more robust sampling than the deterministic sampling?

\textbf{RQ4}. How does the proposed dynamic instance sampling perform compared with the post-hoc sampling using IF?

\begin{table}[t]
  \centering
  \caption{P@N for relation extraction results, on NYT-SMALL and NYT-LARGE, where the best ones are in bold. }
    \begin{tabular}{|l|c|c|c|c|c|c|c|c|}
    \hline
    Dataset & \multicolumn{4}{c|}{{\small NYT-SMALL}}   & \multicolumn{4}{c|}{{\small NYT-LARGE}} \\
\hline
    P@N (\%) & 100   & 200   & 300   & Mean  & 100   & 200   & 300   & Mean \\
\hline
    PCNN + ONE & 54.0 & 52.7 & 52.2 & 53.0 & 70.4 & 66.4 & 63.6 & 66.8 \\
\hline
    PCNN + AVE & 52.7 & 50.8 & 47.3 & 50.3 & 73.0 & 71.2 & 67.8 & 70.6 \\
\hline
    PCNN + ATT & 52.7 & 50.7 & 49.5 & 50.9 & 79.7 & 76.0 & 71.6 & 75.8 \\
\hline
    PCNN + REIF (Proposed) & \textbf{75.2} & \textbf{65.1} & \textbf{60.8} & \textbf{67.0} & \textbf{86.4} & \textbf{82.5} & \textbf{80.3} & \textbf{83.1} \\
\hline
    \end{tabular}%
  \label{tab:patn}%
\end{table}%

\begin{table}[t]
  \centering
  \caption{Prevision (\%) of various DS methods using PCNN as backbones / other DS methods for different recalls (0.1, 0.2, 0.3) on NYT-LARGE. The results of cited methods are drawn from their papers, and the best are in bold.}
\setlength{\tabcolsep}{4mm}{
    \begin{tabular}{l|c|c|c|c}
    \hline
    PCNN & 0.1   & 0.2   & 0.3   & Mean \\
    \hline
    +ONE & 64.7  & 57.1  & 48.9  & 56.9 \\
    +ATT & 74.3  & 63.3  & 56.5  & 64.7 \\
    +ONE+soft-label \cite{liu2017soft} & 71.6  & 62.5  & 54.1  & 62.7 \\
    +ATT+soft-label \cite{liu2017soft} & 75.1  & 67.5  & 55.8  & 66.1 \\
    +ONE+DSGAN \cite{qin2018dsgan}& 65.5  & 57.2  & 50.0  & 57.6 \\
    +ATT+DSGAN  \cite{qin2018dsgan}& 70.5  & 62.2  & 53.3  & 62.0 \\
    +PE+REINF \cite{zeng2018large} & 70.1  & 66.2  & 56.1  & 64.1 \\
    +ONE+RL \cite{qin2018robust} &  66.7 & 56.1  & 48.3  & 64.1 \\
    +ATT+RL \cite{qin2018robust} & 68.3  &  60.0  &  52.2 & 60.2 \\
    +ONE+ADV \cite{wu2017adversarial} & 71.7  & 58.9  & 51.1  & 60.6 \\
    +ONE+AN \cite{han2018denoising} & {80.3}  & 70.2  & 60.3  & 70.3 \\
    +ATT-RA+BAG-ATT \cite{ye2019distant} & 78.8 & 68.9 & 62.1 & 69.9 \\
    +SATT \cite{zhou2021self} & 78.2 & 69.1 & 59.5 & 68.9 \\
    \hline
    DISTRE \cite{alt2019fine} & 65.2  & 64.4 & 60.9 & 63.5\\
    RedSandT \cite{christou2021improving} & 73.1 & 67.3 & 58.0 & 66.1
    \\
    Trans-SA \cite{xiao2022hybrid} & 74.1 & 67.2  & 57.9  & 66.4 \\
\hline
    PCNN+\textbf{REIF} (Ours) & \textbf{82.6}  & \textbf{73.9}  & {60.9}  & \textbf{72.5} \\
    \hline
    \end{tabular}%
}
  \label{tab:p_r_baseline}%
\vskip -1em
\end{table}%

\begin{wrapfigure}{R}{0.5\textwidth}
\centering
  \includegraphics[width=0.99\linewidth]{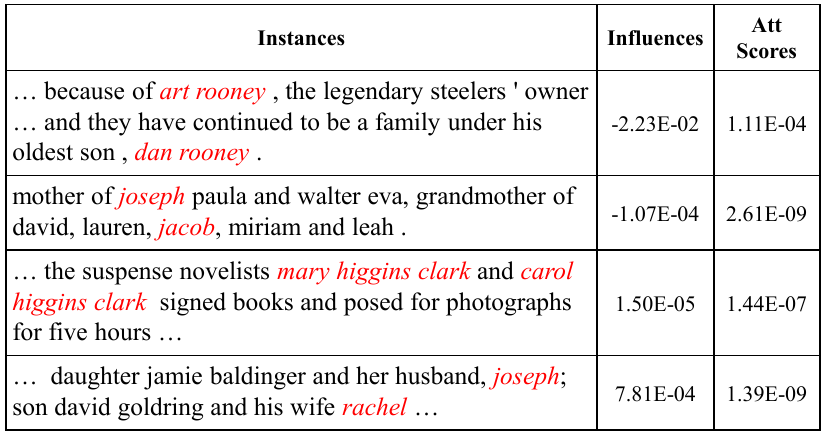}
\caption{Examples of influences calculated with the relation \emph{children}, on NYT-LARGE. The words in bold are entities. The \emph{Att Scores} \cite{lin2016neural} are standardized into $[0,1]$ by softmax, and \emph{Influence} is the smaller the better. \label{fig:case}}
		\vspace{-1.0em}
\end{wrapfigure}

\subsection{Datasets}
In our experiments, we use two versions of widely used NYT datasets, the NYT-SMALL and NYT-LARGE. The small version is released in \cite{riedel2010modeling}, by aligning Freebase with the New York Times corpus. In particular, we use the filtered version of the NYT-SMALL released by \cite{zeng2015distant}. The large version was released by \cite{lin2016neural}. Data statistics can be found in Appendix \ref{app:2}.

\begin{wrapfigure}{R}{0.4\textwidth}
\centering
\begin{subfigure}[t]{0.99\linewidth}
\includegraphics[width=0.99\linewidth]{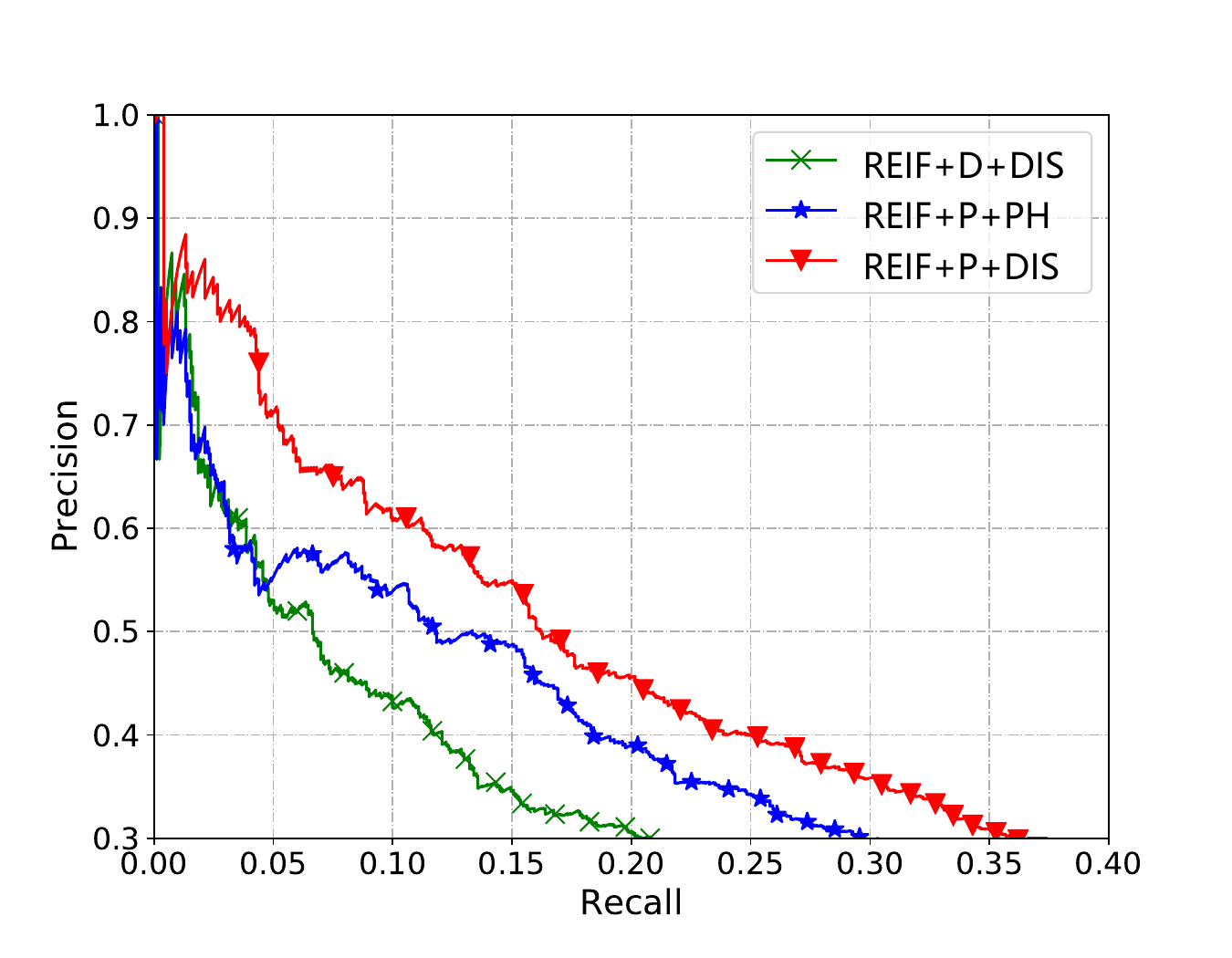}
    \caption{NYT-SMALL}
\end{subfigure}
\begin{subfigure}[t]{0.99\linewidth}
    \includegraphics[width=0.99\linewidth]{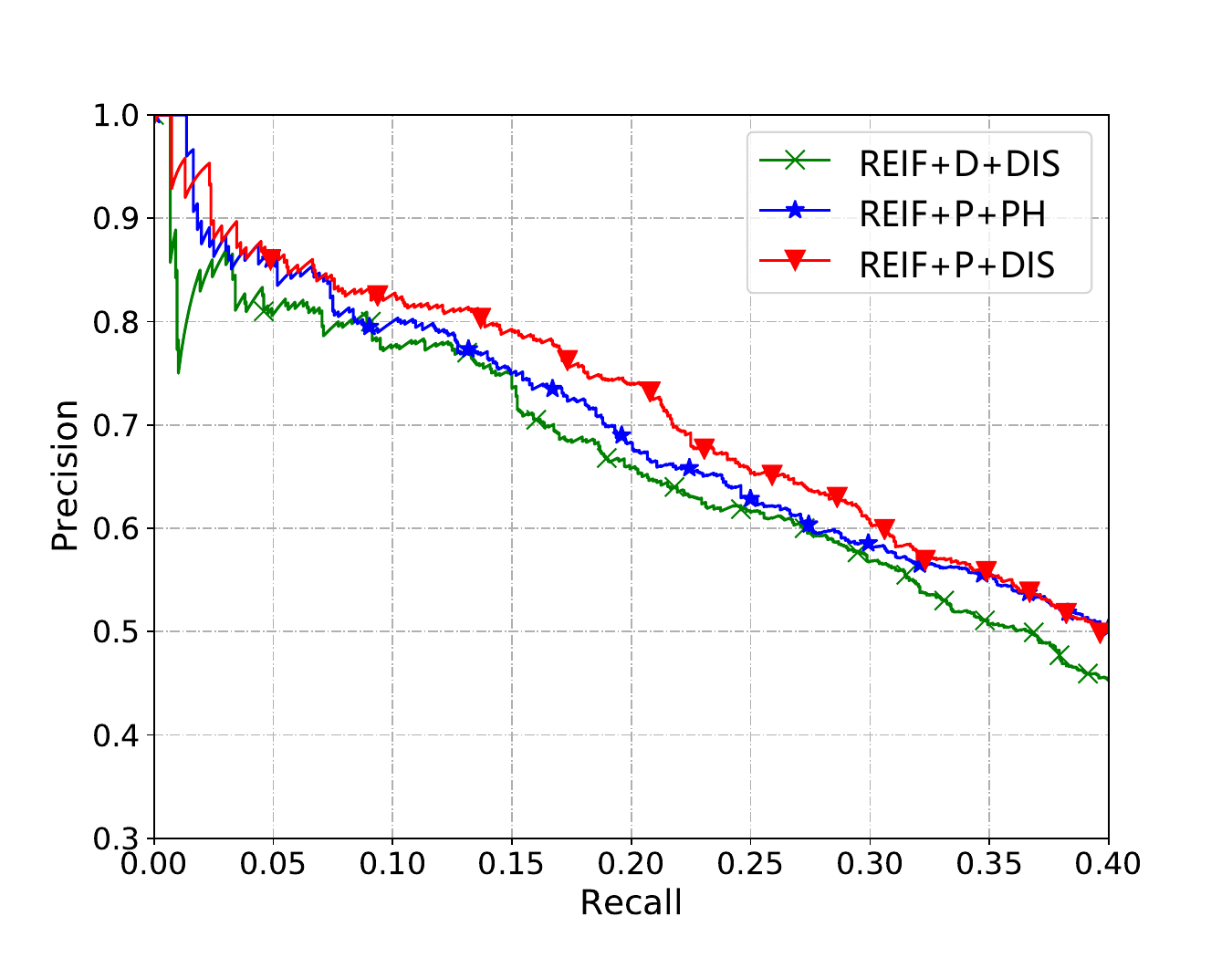}
    \caption{NYT-LARGE}
\end{subfigure}
\caption{Precision-recall curve of compared REIF variants, where the REIF+P+DIS is the REIF with probabilistic sigmoid sampling and dynamic sampling, +D means deterministic sampling and +PH means post-hoc sampling. \label{fig:5}}
\vspace{-1em}
\end{wrapfigure}

\subsection{Experimental Setups}
We pick PCNN (PCNN+ONE) \cite{zeng2015distant} as the backbone in our experiments, and include several baselines for comparison: the attention-based PCNN (PCNN+ATT) and the naive average method (PCNN+AVE) \cite{lin2016neural}. Note that our REIF method is model-agnostic, hence it is applicable for other deep learning based backbones as well, e.g., CNN and RNN. Setups of models can be found in Appendix \ref{app:3}.

We sample a clean validation set from training set by a rule-based approach used in \cite{jia2019arnor}, in order to obtain the inverse HVP required for calculating influences. The details of its establishment and discussions of this validation set can be found in Appendix \ref{app:4}. During subsampling, we set $\alpha=1$ and $r \in \{5\%,10\%,20\%,30\%\}$\footnote{The ceiling function is used for rounding.} for our REIF.

\subsection{Effects of Influence Subsampling (RQ1)}
Fig. \ref{fig:res} shows the precision-recall curve in held-out evaluation of ONE, AVE, ATT, and our REIF, and Table \ref{tab:patn} illustrates the corresponding P@N of all methods. Our REIF performs the best among all methods. In details, on NYT-SMALL, our REIF improves $14\%$ over ONE, and $16.1\%$ over ATT; on NYT-LARGE, the improvements are $14.1\%$ and $5.1\%$, respectively, in terms of the mean P@N. Specifically, REIF only leverages part of instances during training, while ATT involves all instances but performs badly on NYT-SMALL, and ONE only picks one instance per bag. It means that neither picking too many nor too few instances gains satisfactory performance in distant supervision. On contrast, our REIF can detect and pick those favorable ones from the noisy dataset, thus achieving a better model. In distant supervision, our method is effective for achieving nice trade-off between efficiency and effectiveness. Moreover, we compare our method with many DS baselines, including adversarial training, reinforcement learning, attention, and GAN based methods, using the reported results. As shown in Table \ref{tab:p_r_baseline}, REIF still is superior.

\subsection{Effects of Sampling Ratio (RQ2)}
We evaluate the performance of REIF with respect to different $r$ by repeat experiments. Results are reported in Fig. \ref{fig:patn_ratio}. REIF keeps stable when sampling ratio ranges from $5\%$ to $30\%$, such that adding more instances does not make much difference, which might be due to high noise in the NYT dataset, i.e., focusing on those favorable instances is enough for training a satisfactory RE model.

\subsection{Effects of Sigmoid Sampling \& Dynamic Sampling (RQ3, RQ4)}
Our REIF is engaged with the proposed probabilistic sigmoid sampling and DIS, namely REIF+P+DIS. We would like to validate these two techniques compared with the deterministic sampling (REIF+D+DIS), and the post-hoc sampling (REIF+P+PH). Our main observations from Fig. \ref{fig:5} are as follows:

(1) The probabilistic sigmoid sampling is crucial for robust subsampling, as the REIF+D+DIS performs the worst in both datasets. As mentioned in Proposition \ref{theo:var}, drawbacks of REIF+D mainly come from the inaccurate estimate of influence $\hPhi$, due to the non-convexity of neural networks and the use of linear approximations. That is, we could not determine the instances that have $\hPhi$ around the threshold with very high confidence, e.g., deterministic ranking and selecting, since this causes high variance of the resulting test loss, as indicated by Eq. \eqref{eq:propvar}. By contrast, we should assign them similar probabilities to be sampled, as done in REIF+P, to avoid sharp variation of the test loss caused by inaccurate influences in deterministic selection.

(2) Our dynamic sampling method generally performs better than post-hoc sampling in DS, especially on the tail instances. When recall is high, REIF+DIS performs better on the minor relations, thus has higher precision than REIF+PH. In DIS, more minor relation instances are maintained, which facilitates the model's capacity of mining minor relation instances. Considering efficiency and the overall effectiveness, we shall prefer DIS in practice.

\begin{table}[t]
  \centering
  \caption{Precision values for the top 100, 200 and 500 via manual evaluation. Avg denotes the average of the former three columns. Best ones are in bold.}
\setlength{\tabcolsep}{1mm}{
    \begin{tabular}{c|cccc}
    \hline
    Accuracy (\%) & Top 100 & Top 200 & Top 500 & Avg \\
    \hline
    Mintz & 77  & 71  & 55  & 67.7 \\
    MultiR & 83  & 74  & 49  & 68.7 \\
    MIML  & 85  & 75  & 61  & 73.7 \\
    PCNN+ONE & 86  & 80   & 69  & 78.3 \\
    APCNN & 87  & 82  & 72  & 80.3 \\
\hline
    PCNN+ATT & 86  & 81  & 70   & 79.0 \\
    PCNN+REIF & \textbf{88}  & \textbf{84}  & \textbf{76}  & \textbf{82.7} \\
    \hline
    \end{tabular}%
}
  \label{tab:manual_eval}%
\end{table}%

\section{Manual Evaluation \& Case Study}
Held-out evaluation usually suffers from false negative examples in Freebase \cite{zeng2015distant}. To further check our method, we perform manual evaluation by choosing the entity pairs which are labeled as ``NA'' but predicted a relation (not ``NA'') with high confidence. The top-$k$ precisions are reported in Table \ref{tab:manual_eval}, where the results of Mintz \cite{mintz2009distant}, MultiR \cite{hoffmann2011knowledge}, MIML \cite{surdeanu2012multi}, PCNN+ONE \cite{zeng2015distant} and APCNN \cite{ji2017distant} are drawn from their papers. It could be seen our method outperforms baselines in extracting new facts from the false negative examples.

Fig. \ref{fig:case} reports an example of calculating influences that support instance selection. Picking a relation \emph{children} as the example, influences and attention scores \cite{lin2016neural} are computed, from which we can identify that the influences quantitatively measure their individual quality. Recall in Section \ref{sec:3.2} that the smaller influences indicate better data quality. The first and the last instances are clearly right and wrong, respectively, in terms of indicating the relation \emph{children} between their entities. By contrast, the second one tends to be right because it implies that \emph{Joseph} is the parent of \emph{Jacob}. Although two entities in the third instance are very similar, no evidence shows they are relatives. Therefore, sampling probabilities can be obtained via these influences for the further subsampling process. 

\section{Conclusion \& Discussion}
In this work, we proposed an efficient subsampling scheme to find the influential instances for DS, namely REIF. Our method is model-agnostic, therefore it can be engaged in the majority of RE models. REIF can be generalized to other tasks which also confront noisy data. For instance, in other weak supervision scenarios such as active learning, our method can be an effective approach to build data pipeline from data quality measure to data selection. We leave this as our future work.

\bibliographystyle{coling}
\bibliography{coling2020}

\begin{thebibliography}{}

\bibitem[\protect\citename{Abadi \bgroup et al.\egroup
  }2016]{abadi2016tensorflow}
Mart{\'\i}n Abadi, Ashish Agarwal, Paul Barham, Eugene Brevdo, Zhifeng Chen,
  Craig Citro, Greg~S. Corrado, Andy Davis, Jeffrey Dean, Matthieu Devin,
  et~al.
\newblock 2016.
\newblock Tensor{F}low: Large-scale machine learning on heterogeneous
  distributed systems.
\newblock {\em arXiv preprint arXiv:1603.04467}.

\bibitem[\protect\citename{Alt \bgroup et al.\egroup }2019]{alt2019fine}
Christoph Alt, Marc H{\"u}bner, and Leonhard Hennig.
\newblock 2019.
\newblock Fine-tuning pre-trained transformer language models to distantly
  supervised relation extraction.
\newblock In {\em Annual Meeting of the Association for Computational
  Linguistics}, pages 1388--1398.

\bibitem[\protect\citename{Boente \bgroup et al.\egroup
  }2002]{boente2002influence}
Graciela Boente, Ana~M Pires, and Isabel~M Rodrigues.
\newblock 2002.
\newblock Influence functions and outlier detection under the common principal
  components model: A robust approach.
\newblock {\em Biometrika}, 89(4):861--875.

\bibitem[\protect\citename{Chen \bgroup et al.\egroup }2021]{chen2021distant}
Jing Chen, Zhiqiang Guo, and Jie Yang.
\newblock 2021.
\newblock Distant supervision for relation extraction via noise filtering.
\newblock In {\em International Conference on Machine Learning and Computing},
  pages 361--367.

\bibitem[\protect\citename{Christou and Tsoumakas}2021]{christou2021improving}
Despina Christou and Grigorios Tsoumakas.
\newblock 2021.
\newblock Improving distantly-supervised relation extraction through bert-based
  label and instance embeddings.
\newblock {\em IEEE Access}, 9:62574--62582.

\bibitem[\protect\citename{Feng \bgroup et al.\egroup
  }2018]{feng2018reinforcement}
Jun Feng, Minlie Huang, Li~Zhao, Yang Yang, and Xiaoyan Zhu.
\newblock 2018.
\newblock Reinforcement learning for relation classification from noisy data.
\newblock In {\em Thirty-Second AAAI Conference on Artificial Intelligence}.

\bibitem[\protect\citename{Han \bgroup et al.\egroup }2018]{han2018denoising}
Xu~Han, Zhiyuan Liu, and Maosong Sun.
\newblock 2018.
\newblock Denoising distant supervision for relation extraction via
  instance-level adversarial training.
\newblock {\em arXiv preprint arXiv:1805.10959}.

\bibitem[\protect\citename{Hoffmann \bgroup et al.\egroup
  }2011]{hoffmann2011knowledge}
Raphael Hoffmann, Congle Zhang, Xiao Ling, Luke Zettlemoyer, and Daniel~S.
  Weld.
\newblock 2011.
\newblock Knowledge-based weak supervision for information extraction of
  overlapping relations.
\newblock In {\em Annual Meeting of the Association for Computational
  Linguistics}, pages 541--550.

\bibitem[\protect\citename{Huber}2004]{huber2004robust}
Peter~J. Huber.
\newblock 2004.
\newblock {\em Robust Statistics}, volume 523.
\newblock John Wiley \& Sons.

\bibitem[\protect\citename{Ji \bgroup et al.\egroup }2017]{ji2017distant}
Guoliang Ji, Kang Liu, Shizhu He, Jun Zhao, et~al.
\newblock 2017.
\newblock Distant supervision for relation extraction with sentence-level
  attention and entity descriptions.
\newblock In {\em AAAI Conference on Artificial Intelligence}, volume 3060.

\bibitem[\protect\citename{Jia \bgroup et al.\egroup }2019]{jia2019arnor}
Wei Jia, Dai Dai, Xinyan Xiao, and Hua Wu.
\newblock 2019.
\newblock {ARNOR}: Attention regularization based noise reduction for distant
  supervision relation classification.
\newblock In {\em Proceedings of the 57th Annual Meeting of the Association for
  Computational Linguistics}, pages 1399--1408.

\bibitem[\protect\citename{Koh and Liang}2017]{koh2017understanding}
Pang~Wei Koh and Percy Liang.
\newblock 2017.
\newblock Understanding black-box predictions via influence functions.
\newblock In {\em Proceedings of the 34th International Conference on Machine
  Learning-Volume 70}, pages 1885--1894.

\bibitem[\protect\citename{Lin \bgroup et al.\egroup }2016]{lin2016neural}
Yankai Lin, Shiqi Shen, Zhiyuan Liu, Huanbo Luan, and Maosong Sun.
\newblock 2016.
\newblock Neural relation extraction with selective attention over instances.
\newblock In {\em Proceedings of the 54th Annual Meeting of the Association for
  Computational Linguistics}, pages 2124--2133.

\bibitem[\protect\citename{Liu \bgroup et al.\egroup }2017]{liu2017soft}
Tianyu Liu, Kexiang Wang, Baobao Chang, and Zhifang Sui.
\newblock 2017.
\newblock A soft-label method for noise-tolerant distantly supervised relation
  extraction.
\newblock In {\em Proceedings of the 2017 Conference on Empirical Methods in
  Natural Language Processing}, pages 1790--1795.

\bibitem[\protect\citename{Mintz \bgroup et al.\egroup }2009]{mintz2009distant}
Mike Mintz, Steven Bills, Rion Snow, and Dan Jurafsky.
\newblock 2009.
\newblock Distant supervision for relation extraction without labeled data.
\newblock In {\em Proceedings of the 47th Annual Meeting of the Association for
  Computational Linguistics}, pages 1003--1011. Association for Computational
  Linguistics.

\bibitem[\protect\citename{Paszke \bgroup et al.\egroup
  }2019]{paszke2019pytorch}
Adam Paszke, Sam Gross, Francisco Massa, Adam Lerer, James Bradbury, Gregory
  Chanan, Trevor Killeen, Zeming Lin, Natalia Gimelshein, Luca Antiga, et~al.
\newblock 2019.
\newblock Py{T}orch: An imperative style, high-performance deep learning
  library.
\newblock In {\em Advances in Neural Information Processing Systems}, pages
  8024--8035.

\bibitem[\protect\citename{Qin \bgroup et al.\egroup }2018a]{qin2018dsgan}
Pengda Qin, Weiran Xu, and William~Yang Wang.
\newblock 2018a.
\newblock {DSGAN}: Generative adversarial training for distant supervision
  relation extraction.
\newblock {\em arXiv preprint arXiv:1805.09929}.

\bibitem[\protect\citename{Qin \bgroup et al.\egroup }2018b]{qin2018robust}
Pengda Qin, Weiran Xu, and William~Yang Wang.
\newblock 2018b.
\newblock Robust distant supervision relation extraction via deep reinforcement
  learning.
\newblock {\em arXiv preprint arXiv:1805.09927}.

\bibitem[\protect\citename{Riedel \bgroup et al.\egroup
  }2010]{riedel2010modeling}
Sebastian Riedel, Limin Yao, and Andrew McCallum.
\newblock 2010.
\newblock Modeling relations and their mentions without labeled text.
\newblock In {\em Joint European Conference on Machine Learning and Knowledge
  Discovery in Databases}, pages 148--163.

\bibitem[\protect\citename{Shi \bgroup et al.\egroup }2018]{shi2018genre}
Ge~Shi, Chong Feng, Lifu Huang, Boliang Zhang, Heng Ji, Lejian Liao, and He-Yan
  Huang.
\newblock 2018.
\newblock Genre separation network with adversarial training for cross-genre
  relation extraction.
\newblock In {\em Conference on Empirical Methods in Natural Language
  Processing}, pages 1018--1023.

\bibitem[\protect\citename{Surdeanu \bgroup et al.\egroup
  }2012]{surdeanu2012multi}
Mihai Surdeanu, Julie Tibshirani, Ramesh Nallapati, and Christopher~D. Manning.
\newblock 2012.
\newblock Multi-instance multi-label learning for relation extraction.
\newblock In {\em Joint Conference on Empirical Methods in Natural Language
  Processing and Computational Natural Language Learning}, pages 455--465.

\bibitem[\protect\citename{Takamatsu \bgroup et al.\egroup
  }2012]{takamatsu2012reducing}
Shingo Takamatsu, Issei Sato, and Hiroshi Nakagawa.
\newblock 2012.
\newblock Reducing wrong labels in distant supervision for relation extraction.
\newblock In {\em Proceedings of the 50th Annual Meeting of the Association for
  Computational Linguistics}, pages 721--729.

\bibitem[\protect\citename{Wang \bgroup et al.\egroup }2020]{wang2019less}
Zifeng Wang, Hong Zhu, Zhenhua Dong, Xiuqiang He, and Shao-Lun Huang.
\newblock 2020.
\newblock {L}ess is better: Unweighted data subsampling via influence function.
\newblock In {\em Proceedings of the AAAI Conference on Artificial
  Intelligence}.

\bibitem[\protect\citename{Wu \bgroup et al.\egroup }2017]{wu2017adversarial}
Yi~Wu, David Bamman, and Stuart Russell.
\newblock 2017.
\newblock Adversarial training for relation extraction.
\newblock In {\em Proceedings of the Conference on Empirical Methods in Natural
  Language Processing}, pages 1778--1783.

\bibitem[\protect\citename{Xiao \bgroup et al.\egroup }2022]{xiao2022hybrid}
Yan Xiao, Yaochu Jin, Ran Cheng, and Kuangrong Hao.
\newblock 2022.
\newblock Hybrid attention-based transformer block model for distant
  supervision relation extraction.
\newblock {\em Neurocomputing}, 470:29--39.

\bibitem[\protect\citename{Yang \bgroup et al.\egroup }2018]{yang2018distantly}
Yaosheng Yang, Wenliang Chen, Zhenghua Li, Zhengqiu He, and Min Zhang.
\newblock 2018.
\newblock Distantly supervised {NER} with partial annotation learning and
  reinforcement learning.
\newblock In {\em Proceedings of the 27th International Conference on
  Computational Linguistics}, pages 2159--2169.

\bibitem[\protect\citename{Ye and Ling}2020]{ye2019distant}
Zhi-Xiu Ye and Zhen-Hua Ling.
\newblock 2020.
\newblock Distant supervision relation extraction with intra-bag and inter-bag
  attentions.
\newblock In {\em Conference of the North American Chapter of the Association
  for Computational Linguistics}, pages 2810--2819.

\bibitem[\protect\citename{Yuan \bgroup et al.\egroup }2019]{yuan2019cross}
Yujin Yuan, Liyuan Liu, Siliang Tang, Zhongfei Zhang, Yueting Zhuang, Shiliang
  Pu, Fei Wu, and Xiang Ren.
\newblock 2019.
\newblock Cross-relation cross-bag attention for distantly-supervised relation
  extraction.
\newblock In {\em Proceedings of the AAAI Conference on Artificial
  Intelligence}, volume~33, pages 419--426.

\bibitem[\protect\citename{Zeiler}2012]{zeiler2012adadelta}
Matthew~D Zeiler.
\newblock 2012.
\newblock {ADADELTA}: an adaptive learning rate method.
\newblock {\em arXiv preprint arXiv:1212.5701}.

\bibitem[\protect\citename{Zeng \bgroup et al.\egroup }2015]{zeng2015distant}
Daojian Zeng, Kang Liu, Yubo Chen, and Jun Zhao.
\newblock 2015.
\newblock Distant supervision for relation extraction via piecewise
  convolutional neural networks.
\newblock In {\em Proceedings of the Conference on Empirical Methods in Natural
  Language Processing}, pages 1753--1762.

\bibitem[\protect\citename{Zeng \bgroup et al.\egroup }2018]{zeng2018large}
Xiangrong Zeng, Shizhu He, Kang Liu, and Jun Zhao.
\newblock 2018.
\newblock Large scaled relation extraction with reinforcement learning.
\newblock In {\em AAAI Conference on Artificial Intelligence}, pages
  5658--5665.

\bibitem[\protect\citename{Zhou \bgroup et al.\egroup }2018]{ZHOU2018240}
Peng Zhou, Jiaming Xu, Zhenyu Qi, Hongyun Bao, Zhineng Chen, and Bo~Xu.
\newblock 2018.
\newblock Distant supervision for relation extraction with hierarchical
  selective attention.
\newblock {\em Neural Networks}, 108:240 -- 247.

\bibitem[\protect\citename{Zhou \bgroup et al.\egroup }2021]{zhou2021self}
Yanru Zhou, Limin Pan, Chongyou Bai, Senlin Luo, and Zhouting Wu.
\newblock 2021.
\newblock Self-selective attention using correlation between instances for
  distant supervision relation extraction.
\newblock {\em Neural Networks}, 142:213--220.

\end{thebibliography}
\appendix
\setcounter{table}{0}
\setcounter{theorem}{0}
\setcounter{lemma}{0}
\setcounter{equation}{0}
\setcounter{proposition}{0}
\setcounter{figure}{0}
\numberwithin{equation}{section}
\numberwithin{lemma}{section}
\numberwithin{definition}{section}
\numberwithin{figure}{section}

\clearpage

\section{Proof of Proposition 1}\label{app:1}
\begin{proposition}[Robustness of Probabilistic Sampling under Inaccurate Influence] \label{theo:var}
Let $\pi^{\prime}(\Phi_i)$ be the derivative of $\pi(\cdot)$ function when taking $\Phi_i$ as its input, we have
\begin{equation}\label{eq:propres}
\begin{aligned}
 & \sup_{\Phi,\hPhi}   \diff(L)  = \gamma  \sum_{i=1}^n (\pi(\hPhi_i) - \pi(\Phi_i))^2 \sum_{j=1}^m \phi_{i,j}^2 \\
& \simeq \gamma \sum_{i=1}^n \left( (\hPhi_i - \Phi_i)\pi^{\prime}(\Phi_i)  \right)^2 \sum_{j=1}^m \phi_{i,j}^2
\end{aligned}
\end{equation}
where $\gamma$ is a constant.
\end{proposition}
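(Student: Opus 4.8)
The plan is to reduce everything to the first-order influence approximation of Eq.~\eqref{eq:influence} and then isolate the worst-case configuration with Cauchy--Schwarz. First I would observe that subsampling with probabilities $\{\pi_i\}$ is, up to a common normalization, equivalent to reweighting each training loss $\ell_i(\theta)$ by a perturbation $\epsilon_i$ proportional to $\pi_i$ (minus a constant shift absorbed by the empirical-risk term); hence by the same argument that produced Eq.~\eqref{eq:influence}, the retrained minimizer $\ttheta$ obtained under the \emph{true} influences satisfies $\ell_j^v(\ttheta)-\ell_j^v(\htheta)\simeq\sum_{i=1}^n\epsilon_i\,\phi_{i,j}$ with $\epsilon_i\propto\pi(\Phi_i)$, while the minimizer obtained under the inaccurate influences gives $\ell_j^v(\ttheta;\hPhi)-\ell_j^v(\htheta)\simeq\sum_{i=1}^n\hat\epsilon_i\,\phi_{i,j}$ with $\hat\epsilon_i\propto\pi(\hPhi_i)$. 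Subtracting the two linearizations yields
\begin{equation}
\ell_j^v(\ttheta;\hPhi)-\ell_j^v(\ttheta)\;\simeq\;c\sum_{i=1}^n\bigl(\pi(\hPhi_i)-\pi(\Phi_i)\bigr)\phi_{i,j},
\end{equation}
for a constant $c$ coming from the normalization.

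Next I would substitute this into $\diff(L)=\frac1m\sum_j(\ell_j^v(\ttheta;\hPhi)-\ell_j^v(\ttheta))^2$, obtaining $\diff(L)\simeq\frac{c^2}{m}\sum_{j=1}^m\bigl(\sum_i(\pi(\hPhi_i)-\pi(\Phi_i))\phi_{i,j}\bigr)^2$. Expanding the inner square produces a diagonal part $\sum_i(\pi(\hPhi_i)-\pi(\Phi_i))^2\phi_{i,j}^2$ plus cross terms $\sum_{i\neq i'}(\cdot)(\cdot)\phi_{i,j}\phi_{i',j}$. Taking the supremum over admissible $(\Phi,\hPhi)$ — equivalently over the alignment/sign pattern of the per-validation-point influences $\phi_{i,j}$ relative to the error magnitudes $|\pi(\hPhi_i)-\pi(\Phi_i)|$ — the cross terms are maximized exactly when they align, so the bound $\bigl(\sum_i a_ib_i\bigr)^2\le n\sum_i a_i^2b_i^2$ is attained; swapping the order of $\sum_i$ and $\sum_j$ then gives
\begin{equation}
\sup_{\Phi,\hPhi}\diff(L)=\gamma\sum_{i=1}^n\bigl(\pi(\hPhi_i)-\pi(\Phi_i)\bigr)^2\sum_{j=1}^m\phi_{i,j}^2,
\end{equation}
with $\gamma=c^2 n/m$ (any remaining scalars absorbed into $\gamma$), which is the first displayed identity. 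Finally, when the influence estimate is close to the truth, a first-order Taylor expansion of $\pi$ about $\Phi_i$ gives $\pi(\hPhi_i)-\pi(\Phi_i)=(\hPhi_i-\Phi_i)\pi'(\Phi_i)+O((\hPhi_i-\Phi_i)^2)$; substituting the leading term yields the second (``$\simeq$'') form of Eq.~\eqref{eq:propres}.

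I expect the main obstacle to be making the ``$\sup$'' step rigorous: one must state precisely what is being varied — the individual $\phi_{i,j}$ consistent with the aggregate constraint $\Phi_i=\sum_j\phi_{i,j}$, and the discrepancy pattern between $\Phi$ and $\hPhi$ — and argue that the extremal configuration decouples the cross terms so that the Cauchy--Schwarz step is tight rather than merely an upper bound. A secondary subtlety is justifying that the randomized batch-in-bag sampling can be replaced, to first order, by the deterministic reweighting $\epsilon_i\propto\pi_i$ used in the linearization; this is where taking expectations over the sampling and exploiting the smallness of the per-instance weight $1/n$ (so that the influence-function linearization is valid) enters.
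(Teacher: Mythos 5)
Your proposal follows essentially the same route as the paper's proof: linearize the validation-loss change via the influence approximation $\ell_j^v(\ttheta)-\ell_j^v(\htheta)\simeq\sum_{i}\epsilon_i\phi_{i,j}$ with $\epsilon_i\propto\pi(\Phi_i)$ (the step the paper delegates to \citet{wang2019less}), subtract the two linearizations, bound the squared sum by a Cauchy--Schwarz-type inequality with all scalars absorbed into $\gamma$, and finish with a first-order Taylor expansion of $\pi$. Your handling of the supremum and of replacing the randomized sampling by its expected reweighting is in fact slightly more explicit than the paper's, which simply writes the inequality and asserts the $\sup$.
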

\begin{proof}
\begin{align}
& \diff(L) \propto \sum_{j=1}^m(\ell^v_j(\ttheta;\hat{\Phi})-\ell_j^v(\ttheta))^2 \\
& = \sum_{j=1}^m (\ell_j^v(\ttheta;\hat{\Phi}) - \ell_j^v(\htheta) + \ell_j(\htheta) - \ell_j^v(\ttheta))^2 \\
& \propto \sum_{j=1}^m \left( \sum_{i=1}^n \pi(\hPhi_i)\phi_{i,j} - \pi(\Phi_i)\phi_{i,j} \right )^2 \label{eq:refertoless} \\
& \leq \sum_{i=1}^n (\pi(\hPhi_i) - \pi(\Phi_i))^2 \sum_{j=1}^m \phi_{i,j}^2
\end{align}
Eq. \eqref{eq:refertoless} is obtained by definition of probabilistic subsampling because
\begin{equation}
\begin{split}
\ell_j^v(\ttheta) - \ell_j^v(\htheta) &\simeq \sum_{i=1}^n \epsilon_i \phi_{i,j} \\
&\propto \sum_{i=1}^n \pi(\Phi_i)\phi_{i,j}.
\end{split}
\end{equation}
Details can be referred to \cite{wang2019less}.
Taking linear Taylor expansion of the $\pi(\hPhi_i) - \pi(\Phi_i)$ at the last line yields the final result.
\end{proof}

\section{Algorithm} \label{app:alg}
\begin{algorithm}[t] 
\caption{Finding Influential Instances for DS on RE by Influence Subsampling. \label{alg:1}}
\begin{algorithmic}[1] 
\Require Training and validation data $\mathcal{D}_{tr}, \mathcal{D}_{va}$; Hyper-parameters: $r$ and $\alpha$;
\For{epoch $t = 1 \to T$}
\Repeat 
\State Initialize the selected instances set ${X}_{sub} = \emptyset$;
\State Sequentially sample a batch of bags $\{{X}_1, \dots, {X}_{B}\}$ from $\mathcal{D}_{tr}$;
\For{bag $b = 1 \to B$}
\State Obtain instance-level loss as $\vec{\ell} \gets (\ell_1(\htheta_{t}),\dots,\ell_{|{X}_b|}(\htheta_{t}))^{\top}$;
\State Compute influences $\Phi_i \gets s_t^{\top}\nabla_\theta \ell_i(\htheta_t) \ \forall i=1,\dots,|{X}_b|$;
\State Compute sampling probability $\pi_i \gets 1/(1 + \exp(\alpha \times \Phi_i)) \ \forall i$;
\State Sample $r\times |X_b|$ instances from ${X}_b$ to get $\tilde{{X}}_b$, and ${X}_{sub} \gets {X}_{sub} \cup \tilde{X}_b$;
\EndFor
\State Update $\htheta_t$ using the selected subset $X_{sub}$ by gradient descent;
\Until{going through all bags in $\mathcal{D}_{tr}.$}
\State Get validation loss by $L(\htheta_t) \gets \frac1m \sum_{j=1}^m \ell_j^v(\htheta_t)$ on $\mathcal{D}_{va}$;
\State Obtain $s_t \gets H_t^{-1}\nabla_{\theta}L(\htheta_t)$ by stochastic estimation as done in Eq. \eqref{eq:stochasticest};
\EndFor
\end{algorithmic}
\end{algorithm}

\section{Dataset Statistics} \label{app:2}

\begin{table}[htbp]
  \centering
  \caption{Data statistics of used two NYT datasets. ``\# Pos", ``\# Ins", ``\# Rel": number of postive bags, instances and relations, respectively.}
\setlength{\tabcolsep}{1.5mm}{
    \begin{tabular}{l|r|r|r|r}
    \hline
          & \multicolumn{2}{c|}{{ NYT-SMALL}} & \multicolumn{2}{c}{{ NYT-LARGE}} \\
         & \multicolumn{1}{r}{Train} & \multicolumn{1}{r|}{Test} & \multicolumn{1}{r}{Train} & \multicolumn{1}{r}{Test} \\
\hline
    \# Bags &          65,726  &          93,574  &        281,270  &          96,678  \\
    \# Pos &            4,266  &            1,732  &          18,252  &            1,950  \\
    \# Ins &        112,941  &        152,416  &        522,611  &        172,448  \\
    \# Rel &                26  &                26  &                53  &                53  \\
    \hline
    \end{tabular}}
  \label{tab:nytstat}%
\vskip -0.2in
\end{table}%

\section{General Setups for Training PCNN} \label{app:3}
Following the configurations of previous works, we employ word2vec\footnote{https://code.google.com/p/word2vec/} to extract the word embeddings, to process the raw data. Parameters of PCNN are set according to \cite{zeng2015distant}: window size $d^w=3$, sentence embedding size $d^s=230$, word dimension $d^a=50$ and position dimension $d^p=5$ for fair comparison. During training, we fix the batch size $B=128$, dropout ratio $p=0.5$, and use the ADADELTA \cite{zeiler2012adadelta} with parameters $\rho=0.95$ and $\varepsilon=10^{-6}$ for optimization. Since we find the default hyperparameters already lead superior performance of REIF, we did not make further tuning.

\section{Establishing the Validation Set} \label{app:4}
Due to lacking clean validation set, we utilize automatic selection similar to ARNOR \cite{jia2019arnor}. It takes top 10\% high-frequency patterns of each relation as initial pattern, then takes max 5 new patterns in one loop for each relation in bootstrap procedure. We stop bootstrap until 10\% training samples are involved. Our experiments demonstrate REIF can gain significantly from this automatically built validation set, although it is collected by heuristics and not absolutely clean.

\end{document}